\DeclareMathAlphabet\EuRoman{U}{eur}{m}{n}
\SetMathAlphabet\EuRoman{bold}{U}{eur}{b}{n}
\crefname{lemma}{Lemma}{Lemmas}
\crefname{corollary}{Corollary}{Corollaries}
\crefname{theorem}{Theorem}{Theorems}
\crefname{assumption}{Assumption}{Assumptions}
\let\reftagform@=\tagform@
\def\tagform@#1{\maketag@@@{\ignorespaces\textcolor{gray}{(#1)}\unskip\@@italiccorr}}
\renewcommand{\eqref}[1]{\textup{\reftagform@{\ref{#1}}}}
\declaretheorem[style=plain,numberwithin=section,name=Theorem]{theorem}
\declaretheorem[style=plain,sibling=theorem,name=Lemma]{lemma}
\declaretheorem[style=definition,sibling=theorem,name=Definition]{definition}
\numberwithin{theorem}{section}
\def\[#1\]{\begin{align}#1\end{align}}
\def\*[#1\]{\begin{align*}#1\end{align*}}
\newcommand{\weights}{\mathbf{w}}
\newcommand{\parspace}{\mathcal{W}}
\newcommand{\Dist}{\mathcal D}
\newcommand\optparen[1]{\ifthenelse{\equal{#1}{}}{}{(#1)}}
\newcommand{\RiskChar}{R}
\newcommand{\EmpRisk}[2]{\hat \RiskChar_{#1}\optparen{#2}}
\newcommand{\Reals}{\mathbb{R}}
\newcommand{\grad}{\nabla}
\newcommand{\AND}{\wedge}
\newcommand{\dee}{\mathrm{d}}
\DeclareMathOperator*{\newlim}{\mathrm{lim}\vphantom{\mathrm{infsup}}}
\DeclareMathOperator*{\newsup}{\mathrm{sup}\vphantom{\mathrm{infsup}}}
\renewcommand{\lim}{\newlim}
\renewcommand{\sup}{\newsup}
\def\EE{\mathbb{E}}
\newcommand{\norm}[1]{\lVert #1 \rVert}
\newcommand{\iid}{i.i.d.\xspace}
\newcommand{\scorename}{GraNd\xspace}
\newcommand{\scoreerror}{EL2N\xspace}
\newcommand{\grand}{\scorename}
\newcommand{\eltoon}{\scoreerror}
\newcommand{\sample}{example\xspace}
\newcommand{\samples}{examples\xspace}
\newcommand{\trainingdata}{training data\xspace}
\newcommand{\score}[1]{\chi_{#1}}
\newcommand{\feature}{\psi}
\newcommand{\loss}{\ell}
\newcommand{\err}{\mathrm{err}}
\newcommand\blfootnote[1]{%
  \begingroup
  \renewcommand\thefootnote{}\footnote{#1}%
  \addtocounter{footnote}{-1}%
  \endgroup
}
\title{%
Deep Learning on a Data Diet: \\
Finding Important Examples Early in Training
}
\author{%
  Mansheej Paul %
  \\
  Stanford University\\
  \texttt{mansheej@stanford.edu} \\
  \And
  Surya Ganguli \\
  Stanford University; Facebook AI Research \\
  \texttt{sganguli@stanford.edu} \\
  \AND
  Gintare Karolina Dziugaite \\
  Mila \thanks{This work was carried out while the author was at ServiceNow. It was finalized at Google Brain.}\\
  \texttt{gkdz@google.com} \\
}
\begin{document} 
\newcommand{\objective}[1]{\EmpRisk{S}{#1}}

\maketitle
\begin{abstract}
Recent success in deep learning has partially been driven by training increasingly overparametrized networks on ever larger datasets. It is therefore natural to ask: how much of the data is superfluous, which examples are important for generalization, and how do we find them? 
In this work, we make the striking observation that, in standard vision datasets, simple scores averaged over several weight initializations can be used to identify important examples \textit{very early in training}.
We propose two such scores---the Gradient Normed (GraNd) and the Error L2-Norm (EL2N) scores---and demonstrate their efficacy on a range of architectures and datasets by pruning significant fractions of training data without sacrificing test accuracy.
In fact, using EL2N scores calculated a few epochs into training, we can prune half of the CIFAR10 training set while slightly improving test accuracy.
Furthermore, for a given dataset, EL2N scores from one architecture or hyperparameter configuration generalize to other configurations.
Compared to recent work that prunes data by discarding examples that are rarely forgotten \emph{over the course of training}, our scores use only \emph{local information early in training}.
We also use our scores to detect noisy examples and study training dynamics through the lens of important examples---we investigate how the data distribution shapes the loss surface and identify subspaces of the model's data representation that are relatively stable over training.

\end{abstract}

\blfootnote{This version supersedes the published NeurIPS 2021 version of this work. 
Due to a bug in Flax\citep{flax2020github} identified by \citet{andreas}, the
results for the GraNd score computed at initialization were miscalculated, and subsequent conclusions about pruning at initialization were erroneous. 
This version corrects these errors.}
\section{Introduction}
Recently, deep learning has made remarkable progress driven, in part, by training over-parameterized models on ever larger datasets. 
This trend creates new challenges: the large computational resources required pose a roadblock to the democratization of AI.
Memory and resource constrained settings, such as on-device computing, require smaller models and datasets. 
Identifying important training data plays a role in online and active learning.
Finally, it is of theoretical interest to understand how individual examples and sub-populations of training examples influence learning.

To address these challenges, we propose a scoring method that can be used to identify important and difficult examples early in training, and prune the training dataset without large sacrifices in test accuracy. We also investigate how different sub-populations of the training data identified by our score affect the loss surface and training dynamics of the model.

Recent work on pruning data \citep{coleman2019selection,hwang2020data}, can be placed in the broader context of identifying coresets---examples that provably guarantee a small gap in training error on the full dataset \citep{har2007smaller,huggins2016coresets,campbell2018bayesian,tsang2005core,munteanu2018coresets}. However, due to the nonconvex nature of deep learning, coreset techniques make conservative estimates that lead to weak theoretical guarantees and are less effective in practice.

A different approach recently proposed by \citet{toneva2018empirical} tracks the number of times through training an example transitions from being correctly classified to misclassified, called a "forgetting event", and find that some examples are rarely forgotten, while others are forgotten repeatedly. Empirically, they observed that training accuracy is not affected by the rarely forgotten training \samples and a large fraction of the \trainingdata can be removed without any impact on test accuracy. However, since this method relies on collecting forgetting statistics throughout training, the forgetting score is typically calculated in the middle of or at the end of training. \citet{toneva2018empirical} find that, in their example of a ResNet18 trained on CIFAR-10 for 200 epochs, the Spearman rank correlation between early and late scores is good after about 25 epochs and stabilizes after 75 epochs. 

Broadly speaking, the ability to prune datasets raises a number of questions: What is the nature of \samples that can be removed from the \trainingdata without hurting accuracy? How early in training can we recognize such \samples? How many \samples do we need and how does this depend on the data distribution? 
These questions may have no generic answers and so, in this work, we begin to pursue them empirically in the context of several standard vision benchmarks  and standard network architectures.
Answers to these questions may both (1) lead to new methodologies that could dramatically reduce training times and memory requirements, and (2) offer important insights into the training dynamics of deep neural networks, and the role of data.

Our first finding is that \emph{very early in training} (just a few epochs), partial forgetting scores identify large fractions of data that can be pruned. Analyzing this puzzling result with a one gradient step analysis of training suggests a very simple heuristic: use the loss gradient norm of individual \samples to identify important \samples. 
While this approach does not work when the loss gradient norms are computed at the weights early in training of a single trajectory, we find that, surprisingly, averaging these norms over multiple weight initializations does produce a ranking that correlates strongly with forgetting scores and allows us to prune a significant fraction of examples early in training.
Indeed, we can prune 50\% of examples from CIFAR-10 without affecting accuracy, while on the more challenging CIFAR-100 dataset, we can prune 25\% of examples with only a 1\% drop in accuracy.

Through a series of empirical studies, we have begun to tease apart the properties of important examples and how they can depend on the data distribution. In particular, we find that the examples with the very highest norms become superfluous as the amount of label noise increases. Indeed, even on clean data, we find that in the high pruning regime, the best population excludes the very highest-scoring \samples.

\subsection{Contributions}

\begin{itemize}[leftmargin=2em]

\item We propose to score the importance of each training \sample $(x_i,y_i)$  by its expected loss gradient norm (\scorename score), which, up to a constant, bounds the expected change in loss for an arbitrary \sample $(x,y)$ caused by removing $(x_i,y_i)$.

\item Our experimental findings suggest that, within the first few epochs of training, the \scorename score is well-approximated  by the norm of the error vector (\scoreerror score), 
where the error vector is the predicted class probabilities minus one-hot label encoding.
In fact, we find that the \scoreerror score provides an even stronger signal for data-pruning---for CIFAR10 we can prune 50\% of the data, and for the harder CIFAR100, we can prune as much as 25\% of the data without any loss in test accuracy.

\item We study the role of \samples with the highest \scoreerror scores, and find that excluding a small subset of the very highest scoring \samples produces a boost in performance. This boost in performance is enhanced in a corrupted label regime.

\item We introduce a method, based on linearly connected modes, for studying the empirical risk surface in terms of the modes of \emph{subsets of data}, allowing us to identify when, in training, the final performance on subpopulations is determined.
We demonstrate that the linearly connected mode at-convergence of empirical risk surface computed on low \scoreerror score \samples is determined much earlier in training compared to high score \samples. 

\item Finally, we study how an \sample's \scoreerror score connects to the network's training dynamics. We do so by tracking the data-dependent NTK submatrices corresponding to the low or high score \samples, and measuring the rate at which it evolves in a scale-invariant way. We find that the NTK submatrix for the high score \samples evolves faster throughout training, supporting our hypothesis that high-scoring \samples are the ones driving the learning and the changes in the NTK feature space \citep{fort2020deep}.

\end{itemize}

\section{Which samples are important for learning?}

\subsection{Preliminaries}
\label{notation}
We consider supervised classification,
where $S = \{(x_i, y_i)\}_{i=1}^N$ denotes the training set,
drawn \iid from an unknown data distribution $\Dist$,
with input vectors $x \in \Reals^d$ and one-hot vectors $y \in \{0,1\}^K$ encoding labels.
For a fixed neural network architecture,
let $f_\weights(x) \in \Reals^K$ be the logit outputs of the neural network with
weights $\weights \in \parspace \subseteq \Reals^D$ on input $x \in \Reals^d$. 
Let $\sigma$ be the softmax function given by $\sigma(z_1,\dots,z_K)_k = \exp \{z_k\} / \sum_{k'=1}^{K} \exp\{z_{k'}\}$.
Let $p(\weights,x) = \sigma(f(\weights,x))$ denote the neural network output in the form of a probability vector.
For any probability vector $\hat p$,
let $\loss(\hat p, y) = \sum_{k=1}^{K} y^{(k)} \log \hat p^{(k)}$ denote cross-entropy loss.

Let $\weights_0,\weights_1,\weights_2,\dots,\weights_T$ be the iterates of stochastic gradient descent (SGD),
where, for some sequence of minibatches $S_0,S_1,\dots,S_{T-1} \subseteq S$ of size $M$,
we have 
\[\textstyle
\weights_{t} = \weights_{t-1} - \eta  \sum_{(x,y) \in S_{t-1}} g_{t-1}(x,y),
\]
for $g_{t-1}(x,y) = \grad_{\weights_{t-1}} \loss(p(\weights_{t-1},x),y)$, and $t = 1,\dots,T$.

\subsection{Gradient Norm Score and an infinitesimal analysis}

Fix a training set $S$. Due to training with SGD from a random initialization, the weight vector at time $t>0$, $\weights_t$, is a random variable. 
The expected magnitude of the loss vector is our primary focus:
\begin{definition} 
\label{def:score}
The \scorename score of a training \sample $(x,y)$ at time $t$ is \rlap{$
 \score{t}(x,y)
  = \EE_{\weights_t} \left\|g_{t} (x,y) \right\|_2.
  $}
\end{definition}

Here we 
describe conditions under which  the \scorename score controls the contribution of a training \sample to the change in the training loss.
In order to simplify our analysis, we approximate the training dynamics as if they were in continuous time. %

A key quantity in our analysis is the time derivative of the loss for a generic labeled \sample $(x,y)$:
$
\Delta_t ((x,y),S_t) = -\frac{\dee \loss(f_t(x), y) }{\dee t}$ (where $f_t(\cdot) = f_{\weights_t}(\cdot)$), 
i.e., the instantaneous rate of change in the loss on $(x,y)$ at time $t$,
where the gradient is computed on the minibatch $S_t$.
By the chain rule,
\[
\textstyle
\Delta_t ((x,y),S_t) 
=  g_t(x,y)
\frac{\dee \weights_{t}}{\dee t}.
\]
This relates to our discrete time dynamics via $\frac{\dee \weights_{t}}{\dee t} \approx \weights_{t+1} - \weights_{t} = - \eta  \sum_{(x',y') \in S_{t}} g_{t} (x',y')$.

\newcommand{\tp}{(x^*,y^*)}
Our goal is to understand how removing a training point from minibatch $S_t$ affects $\Delta_t (\tp,S_t)$ for any $\tp$. If a training point $(x,y)$ is not in the minibatch $S_t$, then the effect is trivial.
We thus study $\Delta_t (\tp,S)$ in order to be able to rank all the training examples.

\begin{lemma}
\label{lemmadeltadiff}
Let $S_{\neg j} = S \setminus (x_j,y_j)$. 
Then for all $\tp$, there exists $c$ such that
\[
 \|\Delta_t (\tp, S) - \Delta_t (\tp, S_{\neg j}) \| \leq c  \| g_t(x_j,y_j) \| .
 \label{eq:onestepbound}
\]
\end{lemma}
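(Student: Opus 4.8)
The plan is to exploit the linearity of the weight velocity $\frac{\dee \weights_t}{\dee t}$ in the per-example gradients, so that the leave-one-out difference collapses to a single inner product. Starting from the chain-rule identity $\Delta_t((x,y),S) = g_t(x,y)\,\frac{\dee \weights_t}{\dee t}$ stated just before the lemma, I would read the second argument of $\Delta_t$ as specifying which examples enter the velocity, and substitute the continuous-time (gradient-flow) dynamics. Concretely, for the full set the velocity is $-\eta \sum_{(x',y') \in S} g_t(x',y')$, whereas for $S_{\neg j}$ it is $-\eta \sum_{(x',y') \in S_{\neg j}} g_t(x',y')$; the two differ only through which examples are summed.

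First I would write both rates of change explicitly,
\[
\Delta_t((x,y),S) = -\eta\, \ip{g_t(x,y)}{\textstyle\sum_{(x',y') \in S} g_t(x',y')}, \qquad \Delta_t((x,y),S_{\neg j}) = -\eta\, \ip{g_t(x,y)}{\textstyle\sum_{(x',y') \in S_{\neg j}} g_t(x',y')}.
\]
Subtracting and using bilinearity of the inner product, every term indexed by $i \neq j$ cancels because $S$ and $S_{\neg j}$ agree off the $j$-th example, leaving
\[
\Delta_t((x,y),S) - \Delta_t((x,y),S_{\neg j}) = -\eta\, \ip{g_t(x,y)}{g_t(x_j,y_j)}.
\]
Taking absolute values and applying Cauchy--Schwarz then gives $\abs{\Delta_t((x,y),S) - \Delta_t((x,y),S_{\neg j})} \le \eta \norm{g_t(x,y)}\,\norm{g_t(x_j,y_j)}$, so the claimed bound holds with $c = \eta \norm{g_t(x,y)}$, a constant that depends on the test point $(x,y)$ and the time $t$ but crucially not on the removed index $j$.

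I do not expect a serious technical obstacle: the argument is a one-line cancellation followed by Cauchy--Schwarz. The points requiring care are bookkeeping rather than difficulty. First, I would state cleanly that the second argument of $\Delta_t$ determines the summed gradients defining the velocity, so that the two invocations differ only through that sum. Second, I would make explicit the continuous-time idealization already flagged before the lemma, namely replacing the discrete SGD update by the gradient flow $\frac{\dee \weights_t}{\dee t} = -\eta \sum g_t(x',y')$, since the exact cancellation relies on the velocity being linear in the per-example gradients. Finally, I would emphasize that $c$ is uniform in $j$, which is precisely what makes $\norm{g_t(x_j,y_j)}$ (the quantity underlying the \scorename score) the right control on the leave-one-out effect on the loss of any \sample $(x,y)$.
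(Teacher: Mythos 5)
Your proof is correct and is essentially the paper's own argument: the paper likewise writes $\Delta_t((x,y),S)$ as the product of $g_t(x,y)$ with the SGD velocity $\frac{\dee \weights_t}{\dee t} = -\eta \sum_{(x',y') \in S_t} g_t(x',y')$, so that dropping $(x_j,y_j)$ changes the sum by exactly the single term $-\eta\, g_t(x_j,y_j)$, and it chooses the same constant $c = \eta \bigl\| \frac{\dee \loss(f_t(x),y)}{\dee \weights_t} \bigr\| = \eta \|g_t(x,y)\|$, with the Cauchy--Schwarz step left implicit in ``the result follows.'' Your write-up differs only in making the term-by-term cancellation and the Cauchy--Schwarz application explicit, which is a faithful filling-in of the paper's terse proof rather than a different route.
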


\begin{proof}
For a given \sample $\tp$, the chain rule yields
$
\Delta_t (\tp,S) = -\frac{\dee \loss(f_t(x^{*}),y^*) }{\dee t} =  
\frac{\dee \loss(f_t(x^*),y^*)}{ \dee \weights_t}
\frac{\dee \weights_{t}}{\dee t}.
$
Since the weights are updated using SGD, we have $\frac{\dee \weights_{t}}{\dee t} = -\eta  \sum_{(x_j,y_j) \in S_t} g_{t}(x_j,y_j)$.
Letting $c = \eta \| \frac{\dee \loss(f_t(x^*),y^*)}{ \dee \weights_t}\|$, 
the result follows.
\end{proof}

At any given training step, given the current location $\weights_t$, the contribution of a training \sample $(x,y)$\footnote{Here we drop the index $j$ since we refer to an arbitrary training point.} to the decrease of loss on any other \sample, is bounded by \cref{eq:onestepbound}. 
Since the constant $c$ does not depend on the training example $(x,y)$\footnote{Note that $c$ depends on $\tp$ but for a given $\tp$, $c$ is fixed for all training inputs $(x,y)$, allowing us to rank the training \samples.
}, we only consider the gradient norm term, $\|g_{t}(x,y)\|$.
The expected value of this gradient norm
is exactly the \scorename score of $(x,y)$. 
In other words, \samples with a small \scorename score in expectation have a bounded influence on learning how to classify the rest of the training data at a given training time\footnote{The opposite is not  necessarily true: \samples with large scores may have gradients that cancel out and do not contribute much, meaning that this upper bound is loose.}.
We therefore propose to rank training \samples by their \scorename scores, larger norm meaning more important for preserving $\Delta_t(x)$.

For an arbitrary input $x \in \Reals^d$, 
let $\feature_t^{(k)}(x) = \grad_{\weights_t} f_t^{(k)}(x)$ denote the $k$th logit gradient. Then \scorename can be written as\footnote{The score $\score{t}(x,y)$ is a function of $(x,y)$. Thus $(x,y)$
is non-random, and the expectation is taken over the remaining randomness (the weights at time t which depend on a random initialization, random minibatch sequence, GPU noise, etc.).}
\[\textstyle
 \score{t}(x,y)
  = \EE  \left\|  \sum_{k=1}^{K} \grad_{f^{(k)}} \loss (f_t(x),y)^T \feature_t^{(k)}(x)  \right\|_2.
  \label{eq:multiclassscoredefn} 
\]
Under the cross entropy loss, $\grad_{f^{(k)}} \loss (f_t(x),y)^T  = p(\weights_t,x)^{(k)} - y_k$. When $\{\feature_t^{(k)}(x)\}_{k}$ are roughly orthogonal across logits, and are of a similar size across logits and training \samples $x$, then we can approximate \scorename by just the norm of the error vector.

\begin{definition} 
The \scoreerror score of a training sample $(x,y)$ is defined to be 
$\EE \| p(\weights_t,x) - y \|_2$.
\end{definition}

Our experimental results suggest that this approximation becomes accurate after a few epochs of training (see \cref{sec:emp}).
These approximations are also in agreement with the empirical results reported in \citep{fort2019emergent,fort2020deep}. 
\citet[Sec.~5.1]{fort2019emergent} demonstrate that the mean logit gradients are nearly orthogonal among classes throughout training.
The authors demonstrate that per-example gradients cluster around the mean logit gradient.
\citet[Figs.~12D-14D]{fort2020deep} provide evidence that the mean logit gradient directions evolve rapidly early in training and then stabilize (as measured by the cosine distance between mean logit gradient vectors at different times in training).

\subsection{Comparison to forgetting scores}
\label{exp:comparisontoforget}

\citet{toneva2018empirical} define a ``forgetting event'' for a training sample to be a point in training when the classifier switches from making a correct classification decision to an incorrect one.
They define an approximate \emph{forgetting score} for each training \sample as the number of times during training when it was included in a minibatch {\it and} underwent a forgetting event.
\citeauthor{toneva2018empirical} demonstrate that \samples with low forgetting score may be completely omitted during training without any noticeable effect on the accuracy of the learned predictor. 
In \cref{fig:tradeoffs} and \cref{app:correlation}, we make an empirical comparison of forgetting scores to our proposed \scorename and \scoreerror scores.

In \cref{lemmadeltadiff}, we bounded the contribution of a training \sample to the decrease of the loss of any other sample over a single gradient step. 
Due to $\feature_t(\cdot)$'s being time-dependent, it is complicated to extend the analysis to multiple steps.
However, it is interesting to consider a case when $\feature_t(x_i) = \feature(x_i)$ for all $x_i$ in the training set, and $K=1$. 
Then summing the bound in \cref{eq:onestepbound} on how much a sample $(x_j,y_j)$ affects the logit output on an arbitrary point at each time $t \in \{1,..,T\}$, 
we obtain a score that depends on
$\| \feature(x_j) \| |\sum_t (p_t(x_j)-y_j) | $.
For two \samples, $(x,y)$ and $(x',y')$, such that $\| \feature(x') \| \approx \| \feature(x) \|$ ,
we see that the \sample that is learned faster and maintains small error over training time will have a smaller \scorename score on average throughout training. 
Note that $|(p_t(x_j)-y_j) |$, if rescaled, is an upper bound on 0--1 loss, and therefore $\sum_t |(p_t(x_j)-y_j) | $ upper bounds the number of forgetting events during training (after rescaling). 
In this simplified setting an \sample with a high number of forgetting events will also have a high \scorename score.

\section{Empirical Evaluation of \scorename and \scoreerror Scores via Data Pruning}
\label{sec:emp}

\begin{figure}[t]
\includegraphics[width=\linewidth]{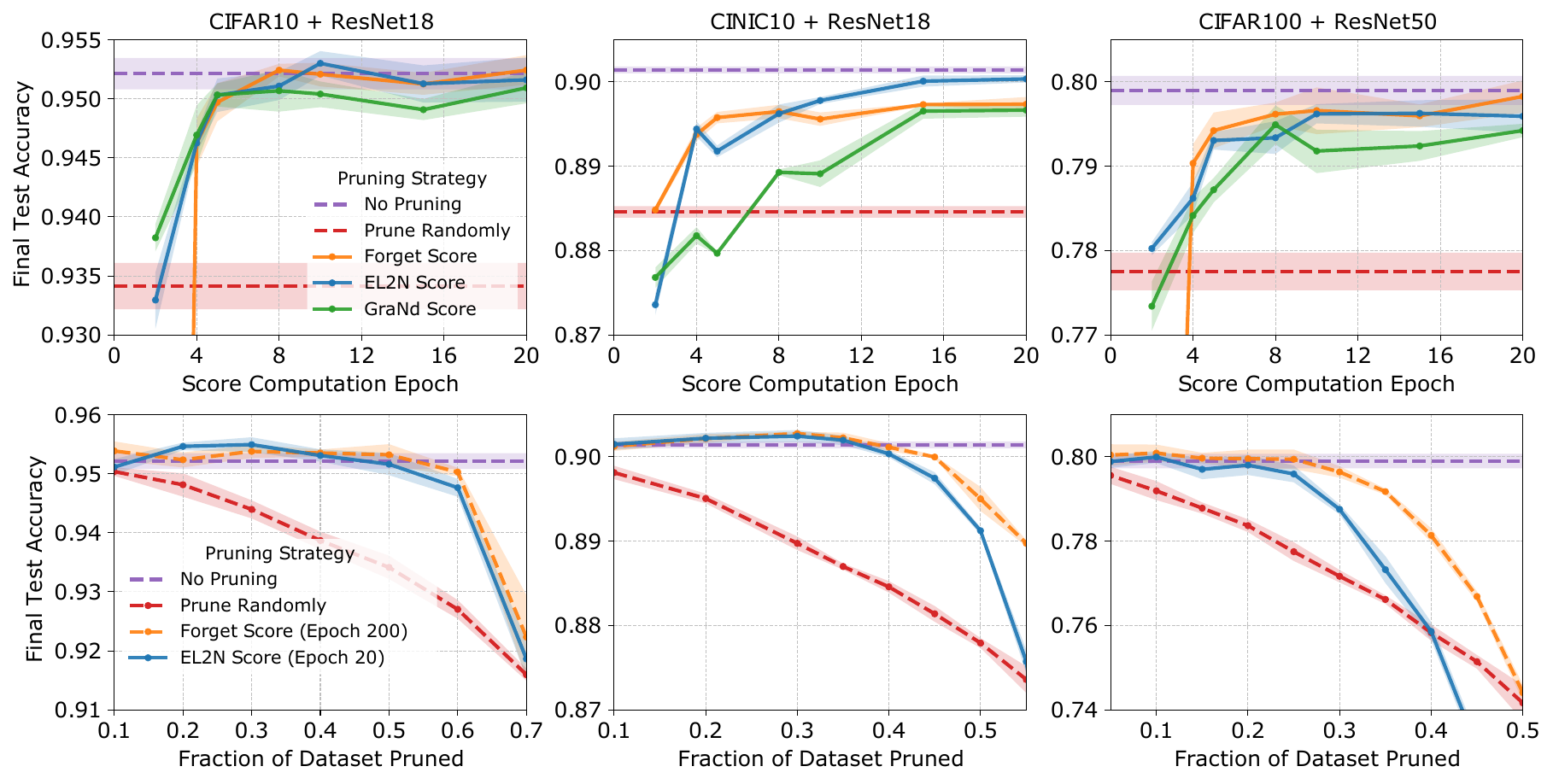}
\centering 
\caption{Columns correspond to three different dataset and network combinations (labeled at the top). 
Each legend applies to all 3 figures in its row.
\textit{First row:} Final test accuracy achieved by training on a subset of training data comprised of examples with maximum forgetting, \scoreerror and \scorename scores computed at different times early in training. 
Subsets of a fixed size are used: networks are trained on 50\% of training data for CIFAR-10, 60\% for CINIC-10 and 75\% for CIFAR-100. 
\textit{Second row:} Final test accuracy achieved by training after different fractions of the dataset are pruned. Here we compare forgetting scores at the end of training and \scoreerror scores early in training (at epoch 20).
In each case, examples with the lowest scores are pruned at initialization.
\textit{In all experiments} accuracies achieved by training on the full dataset and on a random subset of the corresponding size are used as baselines.}
\label{fig:tradeoffs}
\end{figure}

In the previous section, we motivated \grand and \eltoon scores 
by quantifying the influence of a training example on the loss of an arbitrary example after one optimization step.
In this section, we evaluate these scores empirically, and verify that they identify examples important for generalization.
Networks trained on subsets of the data with high scores achieve levels of test accuracy comparable to training on the full dataset and are competitive with other state of the art data pruning methods. 
Perhaps most remarkably, these scores are effective even when computed early in training.

\textbf{Data pruning experiments.} 
We train convolutional neural networks of varying depth--ResNet18 and ResNet50 \citep{he2016deep}--on standard vision datasets of varying difficulty--CIFAR-10, CIFAR-100 \citep{krizhevsky2009learning}, and CINIC-10 \citep{cinic10cite}. 
All scores are calculated by averaging the scores from ten independent training runs.
After calculating scores and selecting a training subset, final test accuracies are obtained by retraining networks from new random initializations on only the selected subset. 
Networks used for evaluating the scores are initialized with seeds that are different from those used to calculate the scores.
For each experiment, we report the mean of four independent runs and represent variability across runs by shading the region
which spans the 16th to 84th percentile of obtained accuracies. See \cref{app:params} for more implementation details and \cref{app:additional} for additional experiments. 

In \cref{fig:tradeoffs}, we show the results of two sets of experiments (top and bottom) on three different network and dataset combinations. 
The first experiment asks, how early in training are forgetting, \grand and \eltoon scores effective at identifying examples important for generalization?
We compare the final test accuracy from training on subsets of fixed size but pruned based on scores computed at different times early in training.
The second experiment compares how \scorename 
and \scoreerror scores early in training and forgetting scores at the end of training negotiate the trade-off between generalization performance and training set size. 
The training sets are constructed by pruning different fractions of the lowest score examples. 
In all examples, training on the full dataset and a random subset of the corresponding size are used as baselines. We make the following observations.

\textbf{Pruning at initialization.}
In all settings, \scorename scores can be computed at initialization and used to select a training subset.
Unfortunately, the scores at initialization do not perform well.
An earlier version of this paper reported some surprising results on pruning at initialization. At subsequent work by \citet{andreas}, these results were found to be erroneous due to a bug in Flax v0.3.3 \footnote{See \url{https://github.com/google/flax/commit/28fbd95500f4bf2f9924d2560062fa50e919b1a5}}.

\textbf{Pruning early in training.}
We find that, after only a few epochs of training, \scoreerror scores are extremely effective at identifying important examples for generalization. For a wide range of intermediate pruning levels, training on the highest scores performs on par with or better than training on the full dataset. Even at higher pruning levels, \scoreerror scores computed using local information early in training are competitive with forgetting scores which integrate information over the training trajectory. This suggests that the average error vector \emph{a few epochs into training} can identify \samples that the network heavily uses to shape the decision boundary \emph{throughout training}.

Interestingly, at extreme levels of pruning with either \scoreerror or \scorename scores, we observe a sharp drop in performance. We hypothesize that this is because at high levels of pruning, using either \scorename or \scoreerror scores leads to bad coverage of the data distribution.
By only focusing on the highest error \samples, it is likely that an entire subpopulation of significant size that is present in the test data is now excluded from the training set.
We only fit a small number of very difficult \samples and do not keep enough of a variety of \samples for  training models with good test error.

\textbf{A property of the data.}
Our results suggest that the ranking of important examples induced by \eltoon scores is a property of the dataset and not specific to a network.
First, in \cref{app:resnet50cifar10}, we show that a ResNet18 and a ResNet50 trained on CIFAR-10 have similar performance curves and the same amount of data can be pruned, even though ResNet50 is a much deeper network with more parameters. 
Second, \eltoon scores calculated on one set of network architecture and hyperparameter configurations can be used to prune data for training with a different network architecture or hyperparameter configuration.
The set of important examples generalizes across architectures and hyperparameters. See \cref{app:arch_gen} for the experiment on generalization across architectures and \cref{app:hpo} for the experiment on using scores calculated during hyperparameter optimization.
Additionally, in an analysis of the sensitivity of the scoring methods to hyperparameters in \cref{app:details}, we observe that scores calculated on a single network do not perform as well as those averaged across networks.

We hypothesize that averaging the gradient or error norms over multiple initializations or training trajectories removes dependence on specific weights, allowing a more accurate distillation of the properties of the dataset. EL2N scores can thus be used to probe and understand how the distribution of the training data impacts dynamics (as we show in the next sections). Additionally, it can also reduce the computational burden of training neural networks; once we compute the scores, future networks can be trained on the pruned dataset.

In the following experiments, we focus on \scoreerror scores computed early in training, as they appear to more accurately identify important examples.

\begin{wrapfigure}{r}{0.66\textwidth}
\includegraphics[width=\linewidth]{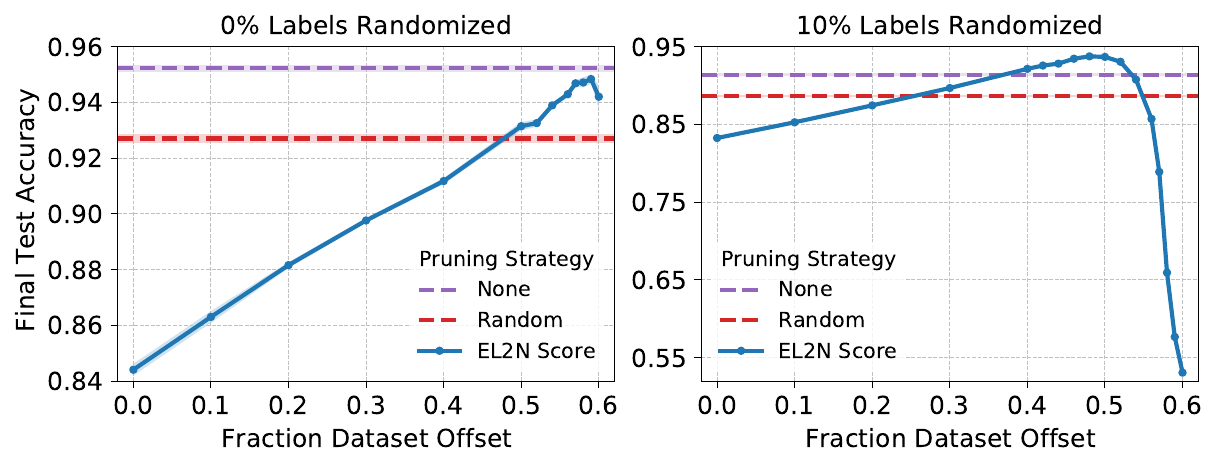}
\centering 
\caption{ResNet18 trained on a 40\% subset of CIFAR-10 with clean \textit{(left)} and 10\% randomized labels \textit{(right)}. 
The training subset contains the \emph{lowest} scoring \samples
\emph{after} \samples with scores below the offset are discarded.
Scores computed at epoch 10.
}
\label{fig:slidingwindow}
\end{wrapfigure}
\vspace{-1em}

\section{Identifying noise examples}
\label{app:noisydata}

In the previous section, we studied the effect of keeping the highest-scoring \samples,
and found that we could train on only the top 50\% of \samples by score without a drop in accuracy (CIFAR-10).
What is the nature of subpopulations of \samples that allow us to reach high accuracy?
One hypothesis is that the highest-scoring \samples are the most important ones for achieving an accurate classifier. In this section, we refute this hypothesis, and demonstrate the role of label noise.

To test whether the highest-scoring \samples are most important for achieving high accuracy, we first sort the \samples by increasing \scoreerror score computed after a small number of training epochs.
Then we perform a sliding window analysis by training on a subset of examples with scores within a window from percentile $f$ to percentile $f+P$ percentile, always keep $P$\% of the data but sliding up $f$. As this window slides to higher percentiles, performance increases, except when the window includes examples with the very highest scores \cref{fig:slidingwindow} (left). Indeed the the optimal sliding window actually excludes approximately $500$ of the highest-scoring training \samples. These effects are reduced in the low pruning regime (see \cref{app:noise}).
In \cref{app:imgs}, we visualize some of the images that are excluded from each class.

Before we analyze these results, we first place them into a wider context, 
where we also change the amount of noise in the underlying label distribution.
We repeat the experiment outlined above, but corrupt a random $K$\% of labels, replacing them with a random label, mirroring the protocol popularized by \citet{zhang2016understanding}.
\cref{fig:slidingwindow} reveals that with increased label corruption, 
the optimal window shifts and excludes a higher number of \samples.
Therefore, the effect we see in the noiseless case appears to be
magnified in the presence of label noise.
\cref{app:noisescore} examines how adding label noise influences the distribution of \eltoon scores of examples.

These findings have several implications.
The most obvious implication is that training with only the highest-scoring samples may not be optimal, especially when there is label noise.  
\NA{When the population has a low Bayes error rate,
using only the highest scoring samples yields optimal results.}
However, without a validation set, one should be cautious in excluding high-score \samples.
\citet{feldman2020does} discusses memorization in a noisy-label setup
and gives conditions under which one should memorize 
in order to not misclassify singleton \samples ( \samples in the \trainingdata that are the sole representatives of a subpopulation). 
For example, if the subpopulation appears with a frequency $\Omega(1/N)$,
memorizing such \samples can improve generalization.
In practice, we may not know whether our data fits these conditions. However, our analysis in \cref{fig:slidingwindow} suggests a simple and powerful method to prune data for optimal performance by optimizing just two hyperparameters of a sliding window using a validation set. 

\section{Optimization landscape and the training dynamics}
\label{app:trainingdynamics}

\subsection{Evolution of the data-dependent NTK}
\label{sec:ntk}

The dynamics of neural-network training in the infinite-width limit 
are now well understood \citep{jacot2018neural,lee2019wide}:
for an appropriate scaling of the learning rate and initial weights, 
the neural network behaves like a linear model in which the data is transformed by the Neural Tangent Kernel (NTK) at initialization, which is defined as 
the product of
the Jacobians of the logits at initialization. In the limit, neural network training implements kernel regression with the fixed NTK as the kernel. 

\begin{wrapfigure}{r}{0.66\textwidth}
\includegraphics[width=\linewidth]{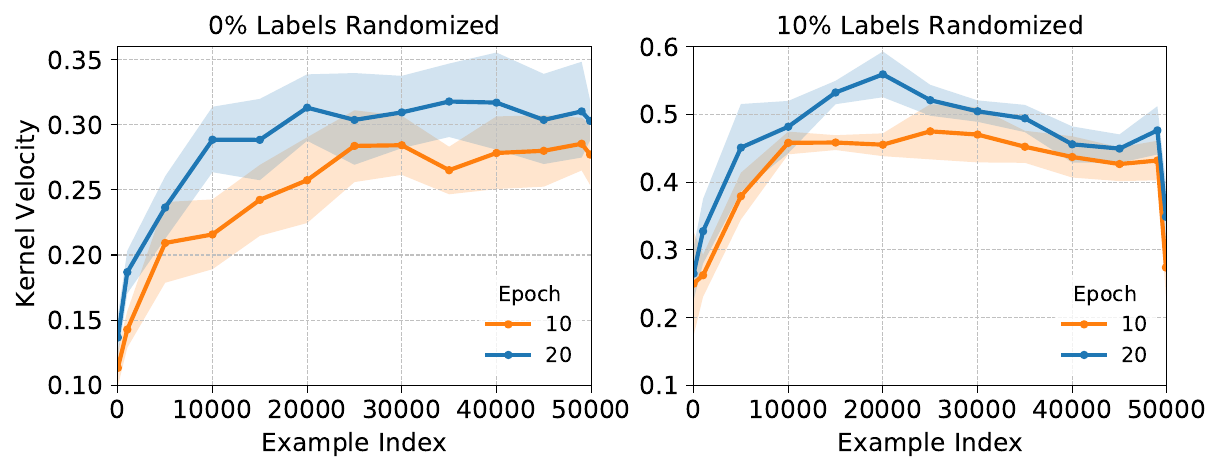}
\centering 
\caption{Kernel velocity for different subsets of images when ResNet18 is trained on CIFAR-10 with all true labels \textit{(left)} and 10\% label noise \textit{(right)}. Examples are sorted in ascending order by \scoreerror scores and each point corresponds to the kernel velocity of 100 contiguous images starting at example index. Both scores and velocities are computed at the same epoch indicated by color.}
\label{fig:ntk}
\vspace{-2em}
\end{wrapfigure}

However, finite neural networks outperform their infinite-width limits \citep{arora2019harnessing} and have different dynamics early in training \citep{lewkowycz2020large}. In fact, rather than being constant, the data-dependent NTK, defined by \citet{fort2020deep} as the Gram matrix of the logit Jacobian, evolves with high velocity in the initial phase of training.
Then, around the time of onset of linear mode connectivity, the NTK velocity stabilizes at a smaller value and remains nearly constant for the rest of the high learning rate training time.

Here we seek to understand which training samples contribute to the NTK gram matrix evolution.
To empirically approximate the velocity of a NTK submatrix corresponding to a subset of images in a scale invariant way, we follow  \citep{fort2020deep}. We compute the cosine distance between two NTK gram matrices on the given subset, one computed at epoch $t$, and another one at epoch $t+1$, one epoch later (see \cref{kvdetails}).\fTBD{GKD: i'm sort of repeating myself here because i talk about velocity above without defining it. maybe introduce all this earlier ... or not.}
\fTBD{GKD: consider explaining in more detail and concretely in the appendix. If so, add a cref here.} We look at submatrices of a fixed size, formed by \samples with contiguous \scoreerror scores.  
\cref{fig:ntk} shows that higher \eltoon scores lead to higher velocities. This relationship is not affected by the time at which both are computed.

Interestingly, the kernel velocity drops off sharply for \samples with the very highest scores when label noise is introduced. 
In \cref{app:noisydata}, we showed that dropping these \samples
boosts the accuracy of the final predictor.
We hypothesize that, while the kernel velocity is higher for harder \samples that the model is actively trying to fit, the kernel velocity drops off for the very highest scoring \samples that might be too difficult to learn, perhaps because they are unrepresentative samples or they have have label noise. 

\subsection{Connections to the Linear Mode Connectivity}
\label{sec:lmc}

We now examine how the ranking of the examples by \scoreerror connects to the geometry of the loss surface. In particular, \citet{frankle2020linear} studied the effect of minibatch randomness on the training trajectory, focusing on identifying the point in training
when two networks, starting from the same weights, but trained with independent minibatches,  converge
to the same ``linearly connected'' mode.
They find that, for standard vision datasets, the onset of this ``linear mode connectivity'' (LMC) happens early in training. 

More precisely, let $w_1,w_2,\dots,w_T$ be the training trajectory of a \emph{parent} network,
fix a \emph{spawning time} $t^{*}$,
and let $v_{t^*},v_{t^*+1},v_{t^*+2},\dots,v_T$ be an independent training trajectory (i.e., with independent minibatches), beginning at $v_{t^*}=w_{t^*}$. We call $v_T$ the child network and $v_{t^*},v_{t^*+1},\dots$ the child trajectory.
The (training) error barrier between two weights $w$ and $w'$,
denoted $\err(w,w';S)$,
is 
the maximum deviation of the training error surface $\EmpRisk{S}{\cdot}$
above the line connecting the empirical risk at $w$ and $w'$.
That is,
\[
\textstyle
\err(w,w';S) 
=
\sup_{\alpha\in [0,1]} 
    \big \{ 
       \EmpRisk{S}{\alpha\, w + (1-\alpha)\, w' }  
         -  \alpha\, \EmpRisk{S}{w} - (1-\alpha)\, \EmpRisk{S}{w'} 
    \big \} .
\]
We then define the \emph{mean (training) error barrier, spawning at $t^*$, at time $t$}, for $t^* \le t \le T$,
denoted $\err_t^{t^*}(S)$,
to be the expected error barrier between $w_t$ and $v_t$ on the data $S$. That is, 
\[
\err_t^{t^*}(S) 
=\EE_{w_{t^*+1:t},v_{t^*+1,t}} [ \err(w_t,v_t;S)  ],
\]
where the expectation is taken over the randomness in the trajectories of $w$ and $v$ \emph{after} $t^*$ due
to the choice of minibatches, conditional on the initial trajectories up through time $t^*$.
(Note that, at the end of training $t=T$, the supremum in $\err(w_T,v_T;S)$ is
often achieved near $\alpha=1/2$, and so this is a cheap approximation used in practice.)
The ``onset'' of linear mode connectivity is the earliest spawning time $t^*$ at which point $\err_T^{t^*}(S) \approx 0$, where $S$ is the whole training set. In our work, we instead compute the error barrier on  \emph{subsets of the training set}, which allows us to compare the training dynamics and modes on \emph{subpopulations}.

In \cref{fig:lmc}, we measure the mean error barrier $\err_t^{t^*}(S')$ as a function of the spawning time $t^*$, in the cases where $S'$ are either 1) the training \samples with the smallest scores, 2) the largest scores, or 3) a random subset of training \samples.
We find that the error barrier falls close to zero very rapidly for \samples that have low \scoreerror scores, and stays high for high score \samples.
These findings suggest that the loss landscape derived from restricted subsets of examples with low and high \scoreerror behave very differently.  The loss landscape derived from easy subsets of examples with low scores is quite flat, in the sense that error barriers between children as a function of spawn time rapidly diminish. On the other hand, the loss landscape derived from harder subsets of examples with higher scores is rougher, with higher error barriers that persist for longer in the spawn time.  Further, this result is in agreement with the results presented in \cref{sec:ntk}, showing that most of the learning happens in the high \scoreerror score \samples.   %

\begin{figure}[t]
\includegraphics[width=\linewidth]{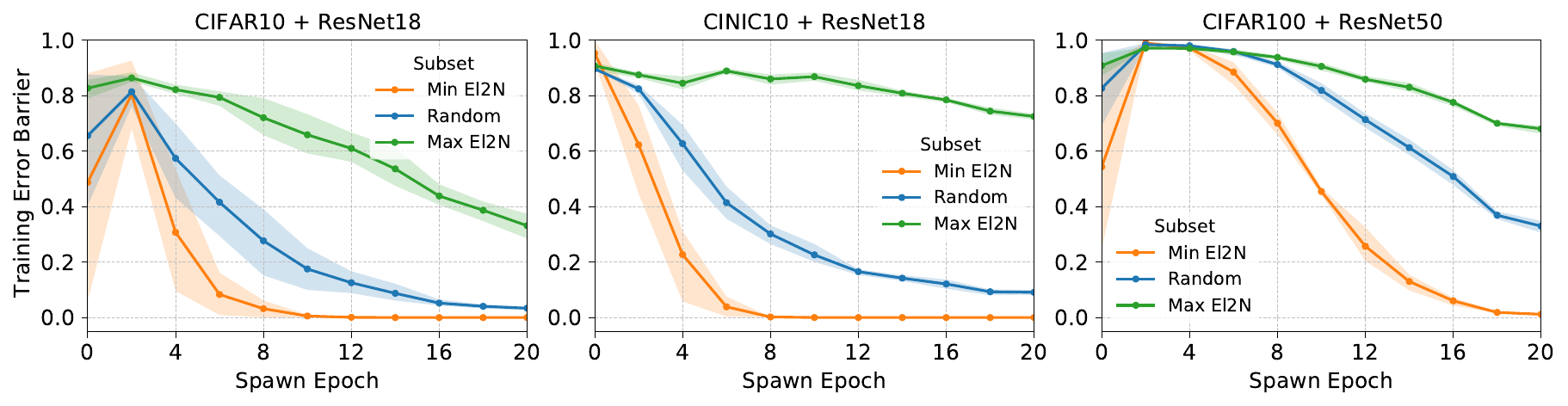}
\centering 
\caption{The final training error barrier between children on subsets of a 1000 highest (\emph{green}) and lowest (\emph{orange}) \scoreerror score \samples, and randomly selected training subset (\emph{blue}) as a function of the spawning time.
\emph{Left to right}: different dataset and network combinations.}
\label{fig:lmc}
\vspace{-1em}
\end{figure}

\section{Related Work}
\label{sec:relatedwork}
\vspace{-1em}
As discussed earlier, our work is closely related to an empirical study by \citet{toneva2018empirical}, 
which examines the frequency with which correct classification decisions are forgotten during training. 
The authors observe that \samples that are rarely forgotten are also ones that do not contribute much to the final accuracy of the predictor. 
In particular, if we retrain from initialization after having removed rarely forgotten \samples from the \trainingdata, we achieve the same accuracy.  Similar to our work, this work analyzes the dynamics of training in deep learning through the lens of training \samples.
However, unlike forgetting scores, our proposed methods use only local information \textit{early in training}. This highlights two properties: example importance is reflected in the local properties of the loss landscape after a few epochs and the ordering of examples by importance is roughly preserved throughout training. We think that this is a key contribution and hope it prompts future empirical and theoretical work exploring the early stage of learning.

\citet{coleman2019selection} use a small proxy network in combination with other training data selection methods to find a small subset of important-for-training \samples, that can then be used to train a large state-of-the-art (SOTA) deep neural network. 
In their empirical study, they observe that most important \samples selected via a proxy model are also important for training a SOTA network.
In addition, they study a proxy which reuses SOTA network's architecture but is trained for a shorter time.
The authors observe that selecting the important examples after at least 50 epochs of training works better than selecting them at random, but not as well as after the full training run. 
They do not study shorter training times for proxies or relate it to the training dynamics in any other way.

Another line of related work is on coresets (see, e.g., \citep{agarwal2005geometric,munteanu2018coresets,huggins2016coresets,campbell2018bayesian,feldman2020turning,tolochinsky2018coresets}, and many others).
The term \emph{coresets} generally refers to a possibly weighted subset of \trainingdata. Much of the work on coresets is focused on identifying small training data subsets that provably yield an $\epsilon$-approximate solution to the original objective (on all the \trainingdata).
Most guarantees require the problem to have special structure, such as convexity. 
For nonconvex problems, like training deep neural networks, guarantees are provided for very conservative proxies, e.g., based on Lipschitz constants or smoothness. 
While coreset selection comes with nice theoretical guarantees, in our opinion, the utility of these methods is best considered an empirical question.

Coresets have also been studied in the active learning community. Here, the goal is to select a small set of \samples to label at any given iteration of training (see, e.g., \citep{wei2015submodularity,sener2017active,killamsetty2020glister, mirzasoleiman2020coresetsA,shen2017deep}, and references therein). 
Coreset selection has also been proposed as a way to increase model robustness \citep{mirzasoleiman2020coresetsB}.

\citeauthor{pleiss2020identifying} use a similar method, the Area Under the Margin (AUM) statistic, but with a slightly different goal: identifying noisy and mislabeled examples. 
Their proposed method exploits differences in the training dynamics of clean and mislabeled samples by keeping track of statistics through the course of training. 
The AUM statistic is similar to forgetting scores in that it uses information from the whole training run. 
In contrast, we focus on \emph{instantaneous information} in the early phase of training. 
In addition to identifying noisy examples, we aim to rank points by importance and therefore also identify redundant/easy examples. 
Thus our approach is complimentary to the AUM statistic and can be used together to obtain higher levels of pruning on noisy datasets.

There have been a number of recent studies looking
at the problem of estimating example difficulty. 
One such approach for identifying difficult examples within a given class is the Variance of Gradients (VoG) score proposed by \citet{agarwal2020estimating}. 
For each image, they calculate the gradient of the activations with respect to the pixels at $K$ different checkpoints over training. The VoG score is the average (over pixels) of the per-pixel variance across these $K$ checkpoints.
The authors conclude that the images that appear more difficult also have a higher VoG score.
Understanding the connections between these two scores calculated in entirely different spaces is an interesting direction for future work.

Another quite different approach estimates example difficulty using prediction depth \citep{baldock2021deep}, which is defined as the first layer at which a k-Nearest Neighbor classifier can correctly classify an example using the representation of the image in all subsequent layers. 
In additional to methodological differences, 
this method uses the final trained network. 
To our knowledge, we are the first to highlight the existence of a strong signal for estimating example difficulty early in training. 

Informally, removing a training \sample from the \trainingdata and not hurting the generalization error suggests that the \sample has small ``influence'' on the test data. 
Influence of the training \samples on test \samples is studied in sample-based explainability \citep{koh2017understanding,barshan2020relatif,pruthi2020estimating}. 
On the theory side, 
\citet{feldman2020does} recently proposed to model data as a mixture of populations
and study the role of memorization when the data distribution is long-tailed.
\citeauthor{feldman2020does} demonstrates conditions under which memorization is necessary for good generalization.
In doing so, he proposes a definition of \sample memorization and influence, 
which can be interpreted as a leave-one-out notion of stability. 
In an empirical study following this work, \citet{feldman2020neural} demonstrate that classifiers trained on computer vision benchmarks benefit from memorization. In particular, training without high-memorization-value \samples comes at a cost of accuracy of the learned neural network classifier.
In \cref{app:comparetomemorization}, we compare \scorename, \scoreerror, forgetting scores, and memorization values on CIFAR-100-trained Resnet50 networks; memorization values do not correlate with the other scores.

\section{Discussion}
\label{sec:discussion}
\vspace{-1em}
In summary, our work both (1) introduces methods to significantly prune data without sacrificing test accuracy using {\it only} local information {\it very early} in training (\cref{fig:tradeoffs}), sometimes even at initialization, 
and (2) uses the resulting methods to obtain new scientific insights into how different subsets of training examples drive the dynamics
of deep learning. 
We start from a principled approach by asking how much on average each training \sample influences the loss reduction of other examples, and from that starting point, we obtain $2$ scores, namely gradient norm (\scorename) and error norm  (\scoreerror) that bound or approximate this influence, with higher scores indicating higher potential influence. We find that \samples with higher scores tend to be harder to learn, in the sense that they are forgotten more often over the entire course of training. We also find that the very highest scoring examples tend to be either unrepresentative outliers of a class, have non standard backgrounds or odd angles, are subject to label noise, or are otherwise difficult.  This observation yields a simple and powerful sliding window method (\cref{fig:slidingwindow}) to prune data by keeping examples within a range of scores, where the start and the end of the range constitute just $2$ hyperparmeters that can be tuned via a validation set. 
This tuning can be done using different hyperparameter settings or on a different network saving computation time (\cref{app:arch_gen}).
Furthermore, we find that high-scoring examples primarily drive feature learning by maximally supporting the velocity of the NTK, whereas learning dynamics might actually give up on the very highest scoring examples that may correspond to unrepresentative examples or noise (\cref{fig:ntk}). 
Finally we show that higher (lower) scoring subsets of examples contribute to a rougher (smoother) loss landscape (\cref{fig:lmc}). Overall this decomposition of both loss landscape geometry and learning dynamics into differential contributions from different types of \samples constitutes an exciting new methodology for analyzing deep learning.  A deeper understanding of the differential role played by different subsets of examples could aid not only in data pruning, but also in curriculum design, active learning, federated learning with privacy, and analysis of fairness and bias. 
Our empirical findings raise a number of interesting theoretical questions, some of which we discuss in \cref{app:ablate}.

\section*{Acknowledgements}
The authors would like to thank Blair Bilodeau, Alex Drouin, \'Etienne Marcotte, and Daniel M. Roy for feedback on drafts, the Toolkit team at ServiceNow for providing the tools and computation resources that greatly accelerated our empirical work, and the NeurIPS reviewers for their thorough engagement and invaluable feedback on label-dependence and practical applications. S.G. thanks the Simons Foundation, NTT Research and an NSF Career award for funding while at Stanford.

\printbibliography

\clearpage

\appendix

\section{Ethical and societal consequences }
\label{sec:ethics}

This work raises several ethical considerations. Being, an empirically driven work, it consumed considerable energy. However, we hope that it will enable advancements in theory that will more efficiently guide experiments. Also, we focus mostly on accuracy as a metric, which tends to hide disparate effects on marginalized groups. But since this work attempts to explicitly uncover the influence of training examples and sub-populations, we hope that it will lead to methods that will decrease bias in the training procedure, especially if marginalized groups are under-represented in the dataset and are thus difficult to learn.

\begin{figure}[t]
\includegraphics[width=\linewidth]{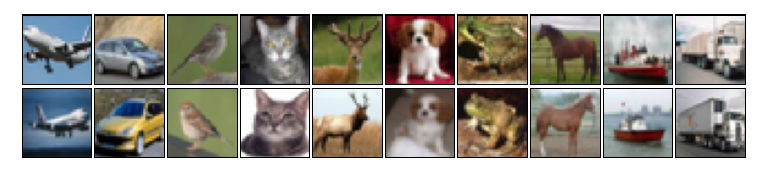}
\centering 
\caption{Examples with the smallest (first row) and second smallest (second row) \grand scores for each class (columns, from left to right: airplane, automobile, bird, cat, deer, dog, frog, horse, ship, truck) from a ResNet18 trained on CIFAR-10. \grand scores were calculated at initialization.} 
\label{fig:min_grans}
\end{figure}

\begin{figure}[t]
\includegraphics[width=\linewidth]{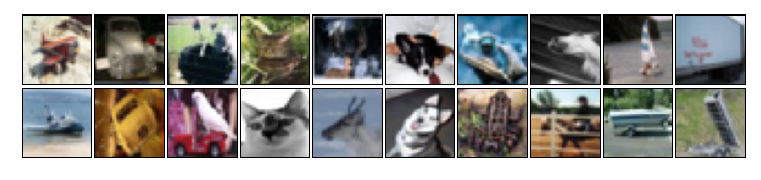}
\centering 
\caption{Examples with the second largest (first row) and  largest (second row) \grand scores for each class (columns, from left to right: airplane, automobile, bird, cat, deer, dog, frog, horse, ship, truck) from a ResNet18 trained on CIFAR-10. \grand scores were calculated at initialization.}
\label{fig:max_grans}
\end{figure}

\begin{figure}[t]
\includegraphics[width=\linewidth]{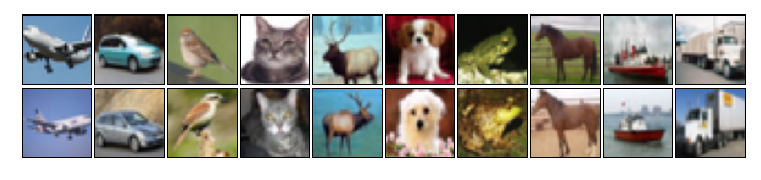}
\centering 
\caption{Examples with the smallest (first row) and second smallest (second row) \eltoon scores for each class (columns, from left to right: airplane, automobile, bird, cat, deer, dog, frog, horse, ship, truck) from a ResNet18 trained on CIFAR-10. \eltoon scores were calculated at epoch 10.}
\label{fig:min_el2ns}
\end{figure}

\begin{figure}[t]
\includegraphics[width=\linewidth]{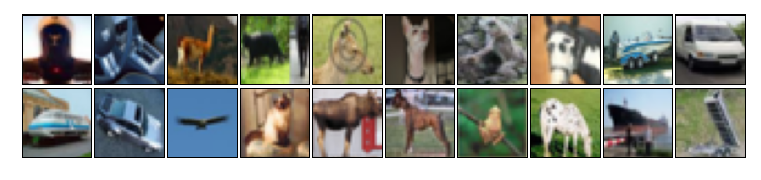}
\centering 
\caption{Examples with the second largest (first row) and  largest (second row) \eltoon scores for each class (columns, from left to right: airplane, automobile, bird, cat, deer, dog, frog, horse, ship, truck) from a ResNet18 trained on CIFAR-10. \eltoon scores were calculated at epoch 10.}
\label{fig:max_el2ns}
\end{figure}

\section{Implementation Details}
\label{app:params}

The code is made available at \url{https://github.com/mansheej/data_diet}

\subsection{Resources used}
\label{sec:resources}
We run all experiments on a single 16GB NVIDIA Tesla V100 GPU. The entire project (from early exploration to final paper) used about 15000 GPU hours. We used an internal cluster at ServiceNow.%

\subsection{Training details}

\paragraph{Deep Learning Frameworks.} We used JAX \citep{jax2018github} and Flax \citep{flax2020github} in our implementations.

\paragraph{Data.}
We use CIFAR-10, CIFAR-100 \citep{krizhevsky2009learning}, and CINIC-10 \citep{cinic10cite}. 
CIFAR-10 and CIFAR-100 are used in their standard format. 
For CINIC-10, we combine the training and validation sets into a single training set with 180000 images.
The standard test set of 90000 images is used for testing.
Each dataset is normalized by its per channel mean and standard deviation over the training set.
All datasets get the same data augmentation: pad by 4 pixels on all sides, random crop to 32$\times$32 pixels, and left-right flip image with probability half.

\paragraph{Models.}
We use ResNet18-v1 and ResNet50-v1. Our implementation is based on the example in Flax \citep{flax2020github} designed for larger high resolution images. Since CIFAR and CINIC images are 32$\times$32 pixels only, we use the low-resolution variant of these networks: the first two layers (a convolution layer with 7$\times$7 kernel and 2$\times$2 stride, and a max pooling layer with 3$\times$3 kernel and 2$\times$2 stride) are replaced with a single convolution layer with 3$\times$3 kernel and 1$\times$1 stride.

\paragraph{Training hyperparameters.}
All networks are trained with the Stochastic Gradient Descent (SGD) optimizer, learning rate = 0.1, nesterov momentum = 0.9, weight decay = 0.0005. 
For CIFAR-10 and CIFAR-100, we use batch size = 128, and for CINIC-10, we use batch size = 256. 
The learning rate is decayed by a factor of 5 after 60, 120 and 160 epochs and all networks are trained for a total of 200 epochs (for the full dataset, i.e. 78000 steps for CIFAR-10 and CIFAR-100, and 140600 steps for CINIC-10).
When using a pruned dataset, to allow for a fair comparison with the full dataset, we keep the number of iterations and schedule fixed for different pruning levels.

\subsection{Experimental details}
\label{kvdetails}

\paragraph{Reporting results.} For every quantity we plot, we do 4 independent runs (independent model initialization and SGD noise) and report the mean and the 16th to 84th percentile of obtained accuracies for representing variability across runs. 
The mean is reported as lines and the variability is reported as shading in the plots.
When evaluating scores, we train new randomly initialized models with different seeds from those used to calculate the scores.

\paragraph{Calculating scores.} All scores (\eltoon, \grand and forgetting scores) are calculated by averaging the scores across 10 independent runs.

\paragraph{Random label experiments.} For the random label experiments, at the beginning of training, we pick 10\% of the examples randomly and permute their labels (to keep overall label statistics fixed). 
The subsets are selected as follows: 
\begin{enumerate}
    \item score all examples and sort them in ascending order by score; 
    \item drop the set of images with the smallest scores that make up a fraction of the dataset equal to the specified offset; 
    \item keep the next set of images based on the given subset size; 
    \item drop all the following images.
\end{enumerate}

\paragraph{Kernel velocity experiments.} The kernel velocities are calculated with 100 examples. 
The examples are picked as follows: first, score all examples and sort them in ascending order by score; then, for every example index in the figure, calculate the NTK gram matrix velocity (described in \cref{sec:ntk}) for 100 contiguous sorted images starting at that example index.

The NTK submatrix velocity on a subset of examples at a particular point in training is defined as follows. 
Let the examples be $x_1, x_2, ..., x_m$. 
Let the time at which the NTK submatrix is calculated be $t$, the parameters of the network $f$ is $\mathbf{w}_t$. 
Let $C$ be the number of classes and $N$ the number of parameters in the network.
Using the notation in \cref{notation}, the $k$th logit gradient of the model on example $i$ is $\psi^{(k)}_t(x_i) = \grad_{\weights_t} f_t^{(k)}(x_i)$. We cast these into a $mC\times N$ matrix $\Psi_t$ where the rows run over each logit gradient of each image and the columns run over the parameters of the model. 
The NTK submatrix is a $mC\times mC$ matrix given by $K_t = \Psi_t\Psi_t^T$.
The kernel velocity is calculated as
\begin{equation}
    v = 1 - \frac{\langle K_t, K_{t+1}\rangle}{\norm{K_t} \norm{K_{t+1}}}
\end{equation}
where $\langle \cdot, \cdot \rangle$ is the Frobenius inner product and $\norm{\cdot}$ is the Frobenius norm.

\paragraph{Linear mode connectivity experiments.} The training error barrier between children is calculated by following \citep{fort2020deep,frankle2020linear}. In addition to estimating the error barrier on 1000 random images, we also estimate the error barrier on 1000 images with the largest and smallest \eltoon score. The \eltoon score used is calculated at epoch 10 of the parent run.

\section{Example Images}
\label{app:imgs}

In this section, we examine the examples with small and large \grand and \eltoon scores for a ResNet18 trained on CIFAR-10. 
\grand scores were computed at initialization and \eltoon scores at epoch 10.
We show two examples from each class with both minimum and maximum \eltoon and \grand scores in \cref{fig:min_grans,fig:max_grans,fig:min_el2ns,fig:max_el2ns}.
The examples with the minimum \grand and \eltoon scores tend to be simple, canonical representations of each class pictured from very typical angles. 
The examples with maximum scores are harder to identify; they are blurrier, from strange angles or have unexpected backgrounds or other artifacts.

\begin{figure}[t]
\includegraphics[width=0.5\linewidth]{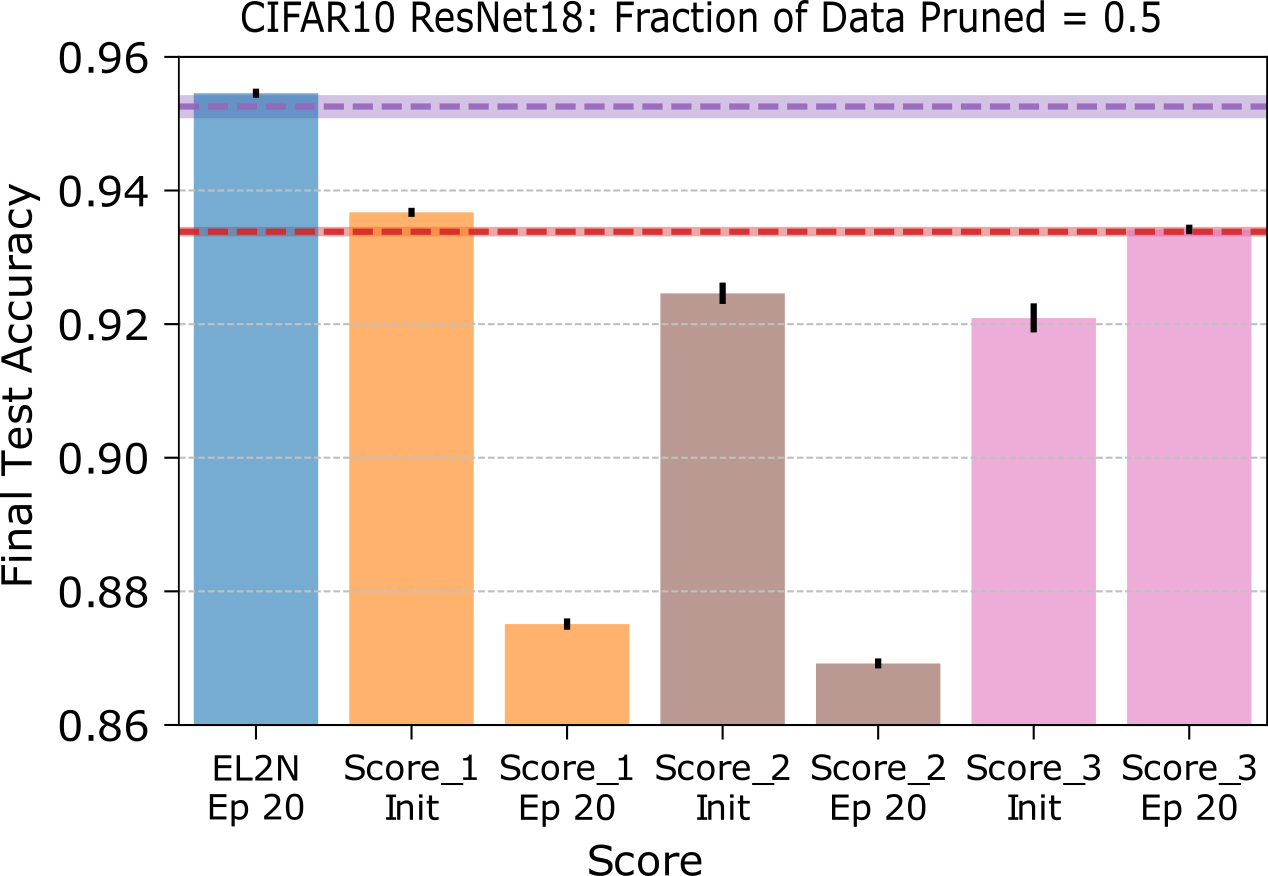}
\centering 
\caption{Compare \eltoon score (label-dependent) to three label-independent variants at initialization and early in training. Experiments performed on ResNet18 trained on CIFAR10. Purple dashed line corresponds to final test accuracy after training on the full training dataset and red dashed line corresponds to final test accuracy after training on a random 50\% subset of the data. Bars are the final test accuracies obtained from training on 50\% of the training data with the highest corresponding score. Definitions of scores 1-3 in \cref{app:ablate}.}
\label{fig:ablate_labels}
\end{figure}

\section{Comparisons to Label Independent Scores}
\label{app:ablate}
For classification tasks, the GraNd score for a training example $(x, y)$, is defined as
\[
\mathrm{GraNd}_t(x, y)=\mathbb{E}\left\|\sum_{k=1}^K \left(p(\weights_t,x)^{(k)} - y_k\right) \psi_t^{(k)}(x) \right\|_2
\label{eq:grandapp}
\]
where $k$ indexes classes. 

At initialization, the weights are label independent, and thus $\psi_t^{(k)}(x)$ has no label information. 
Therefore, the \grand score in theory should not depend on the labels.

Note, however, that in practice we approximate the expectation using a finite number of samples of different initializations.
In this section we ablate the \grand score to tease apart the contribution of the labels, if any, and whether there is any signal in the logit gradient averaged over classes.

In contrast to the \grand score, the EL2N scores consist of just the label prediction error:
\[
\mathrm{EL2N}_t(x, y)=\mathbb{E}\left\|p(\weights_t,x) - y\right\|_2.
\label{eq:el2napp}
\]

We evaluate three different label independent variants, defined as
\[
\mathrm{Score_1}_t(x)= \mathbb{E}\left\|\sum_{k=1}^K\psi_t^{(k)}(x)\right\|_2,
\label{eq:score1}
\] 
\[
\mathrm{Score_2}_t(x)= \mathbb{E}\left\|p(\weights_t,x) - \frac{1}{K}\mathbf{1}\right\|_2,
\label{eq:score2}
\]
\[
\mathrm{Score_3}_t(x)=\mathbb{E}\left[\sum_{k=1}^K\|\psi_t^{(k)}(x)\|_2^2\right].
\label{eq:score3}
\]
Our experimental setup is as follows:
\begin{enumerate}
    \item Train 10 independent ResNet18 networks on CIFAR10.
    \item Calculate the scores in \cref{eq:score1,eq:score2,eq:score3} at both initialization and epoch 20. The expectation is taken across the 10 networks. This gives us 6 new scores: 3 score types at 2 initializations each.
    \item Prune 50\% of the CIFAR10 training examples that have the smallest scores. This gives us 6 new pruned datasets, one for each score.
    \item Train 3 new randomly initialized ResNet18 networks on each of the 6 pruned datasets.
    \item For each of the 6 scores, evaluate their performance by averaging their final test accuracy across the 3 networks trained on the corresponding dataset.
\end{enumerate}
In \cref{fig:ablate_labels} we compare these scores to training with a 50\% subset of 
maximum EL2N scores at epoch 20, a random 50\% subset and the full training set. 
Only the label-dependent scores perform significantly better than the random baseline.

\section{Additional Experiments}
\label{app:additional}

\begin{figure}[t]
\includegraphics[width=0.7\linewidth]{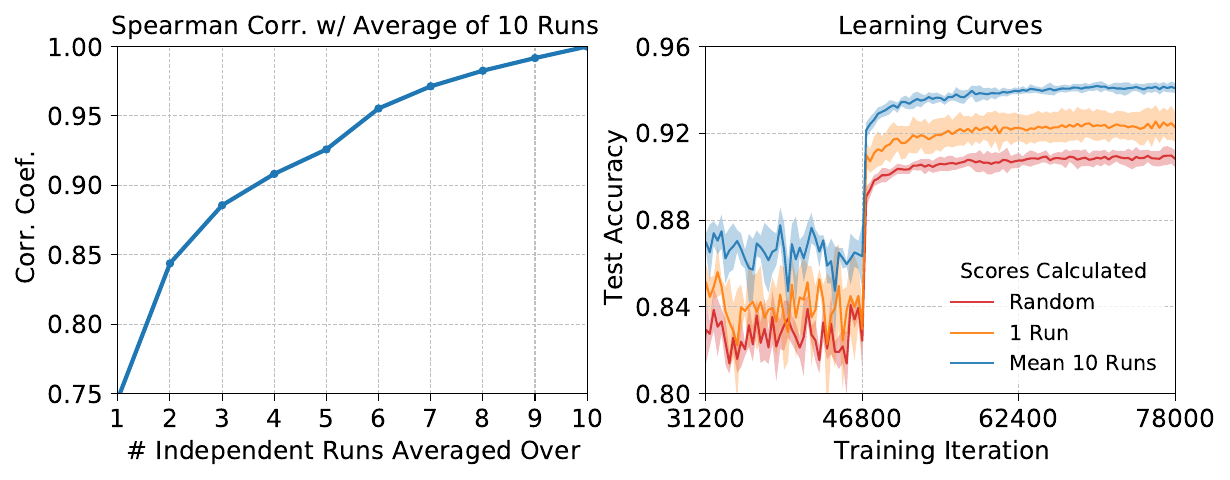}
\centering 
\caption{The effect of averaging \grand scores calculated at initialization of a ResNet18 trained on CIFAR-10. \textit{Left:} The Spearman rank correlation coefficient between \grand scores obtained by averaging over a given number of independent runs (x-axis) to those obtained by averaging over 10 independent runs. \textit{Right:} Training accuracy curves for ResNet18 trained on a 50\% subset of CIFAR-10. The training subset is obtained by either random sampling or keeping the examples with the largest \grand scores. We compare the test accuracy obtained using \grand scores from 1 initialization and from averaging over 10 independent initializations. We zoom in to the end of training to highlight the differences between the learning curves.}
\label{fig:grand_stab}
\end{figure}

\begin{figure}[t]
\includegraphics[width=0.7\linewidth]{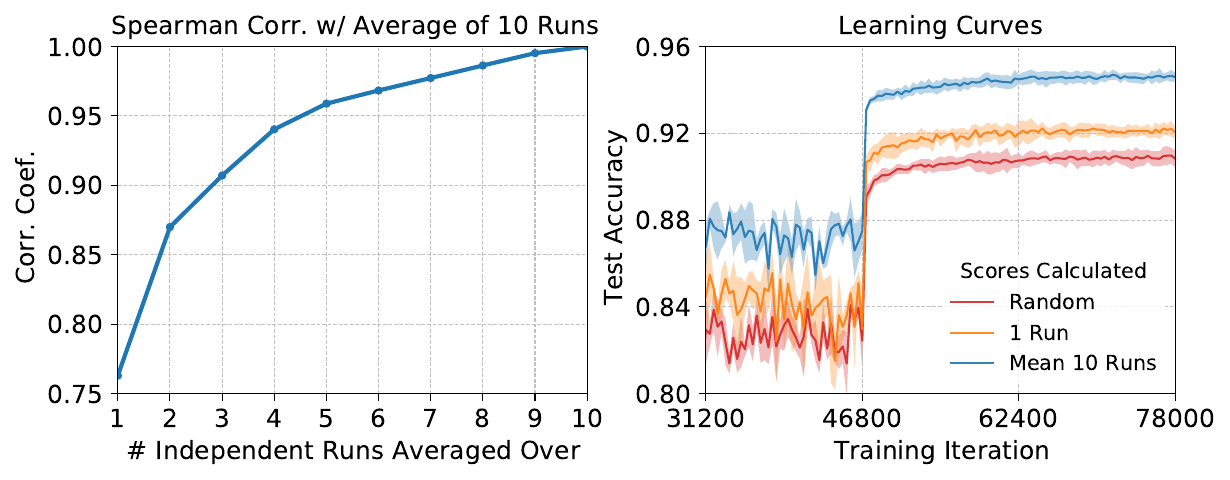}
\centering 
\caption{The effect of averaging \eltoon scores calculated at epoch 10 of a ResNet18 trained on CIFAR-10. \textit{Left:} The Spearman rank correlation coefficient between \eltoon scores obtained by averaging over a given number of independent runs (x-axis) to those obtained by averaging over 10 independent runs. \textit{Right:} Training accuracy curves for ResNet18 trained on a 50\% subset of CIFAR-10. The subset is selected by either random sampling or keeping the examples with the largest \eltoon scores. We compare the test accuracy obtained when pruning based on \eltoon scores from a single run and from averaging over 10 independent runs. We zoom in to the end of training to highlight the differences between the learning curves.}
\label{fig:eltoon_stab}
\end{figure}

\subsection{Sensitivity analysis of \grand and \eltoon scores}
\label{app:details}

In all our experiments, \grand and \eltoon scores are averaged over 10 independent initializations or runs.
This turns out to be essential for successful pruning using the scores.

In \cref{fig:grand_stab} (right) we show the effect of averaging on \grand scores computed at initialization on a ResNet18, CIFAR-10.
On average, \grand scores for any individual run have a Spearman rank correlation of about 0.75 with the \grand scores averaged over 10 runs.

We also compare learning curves when training using 50\% of the training data selected as follows: using high \grand scores at a single initialization; using high \grand scores averaged over 10 initializations; randomly (baseline). 
As seen in \cref{fig:grand_stab} (right), using scores averaged over 10 initializations performs significantly better. 
In \cref{fig:eltoon_stab}, we show similar results for \eltoon scores calculated at epoch 10 on a ResNet18 trained on CIFAR-10.

These results suggest that \grand and \eltoon scores represent properties of the dataset rather than of a specific network weights. 
To get an accurate ranking of example importance, we need to average out the effects of individual initializations/weights.
Empirically, we find that averaging over 10-20 runs suffices, and averaging over more runs has insignificant additional benefit.

\subsection{Comparison between scores from different architectures on the same dataset}
\label{app:resnet50cifar10}

\begin{figure}[t]
\includegraphics[width=0.7\linewidth]{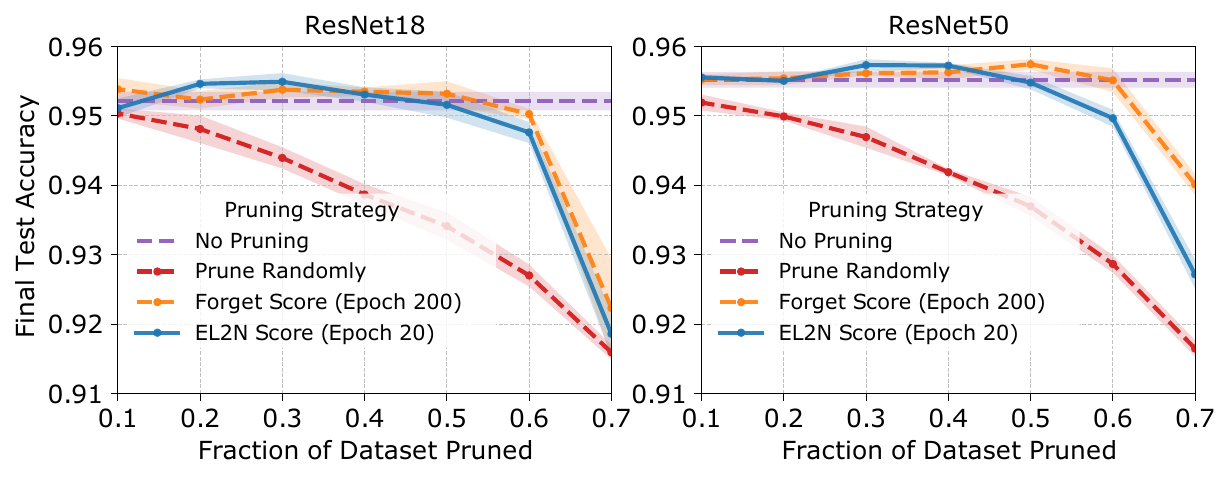}
\centering 
\caption{Experimental setup similar to \cref{fig:tradeoffs}. We compare CIFAR10 data pruning on ResNet18 and ResNet50. 
The y-axis indicates the final test accuracy achieved by training after pruning different fractions of the dataset (x-axis). Compare forgetting scores at the end of training and \scoreerror scores early in training (at epoch 20).
In each case, examples with the lowest scores are pruned and then the networks are trained from initialization on the data that was not pruned.
In both plots accuracy achieved by training on the full dataset and on a random subset of the corresponding size dataset are used as baselines.}
\label{fig:18v50}
\end{figure}

In this section, we examine how the choice of network architecture affects pruning with \eltoon and \grand scores. Specifically, we repeat the experiment in \cref{fig:tradeoffs} bottom row, but for a ResNet18 and a ResNet50 trained on CIFAR-10.
The results are shown in \cref{fig:18v50}. 
Independently of the network architecture tested, data pruning by \eltoon scores computed at epoch 20 is competitive with pruning based on forgetting scores computed at epoch 200.
When trained on either of the networks, pruning based on \grand scores does significantly better than the random baseline. 
Overall, these results suggest that network depth has a small effect on our data-pruning results.

\begin{figure}[t]
\includegraphics[width=0.7\linewidth]{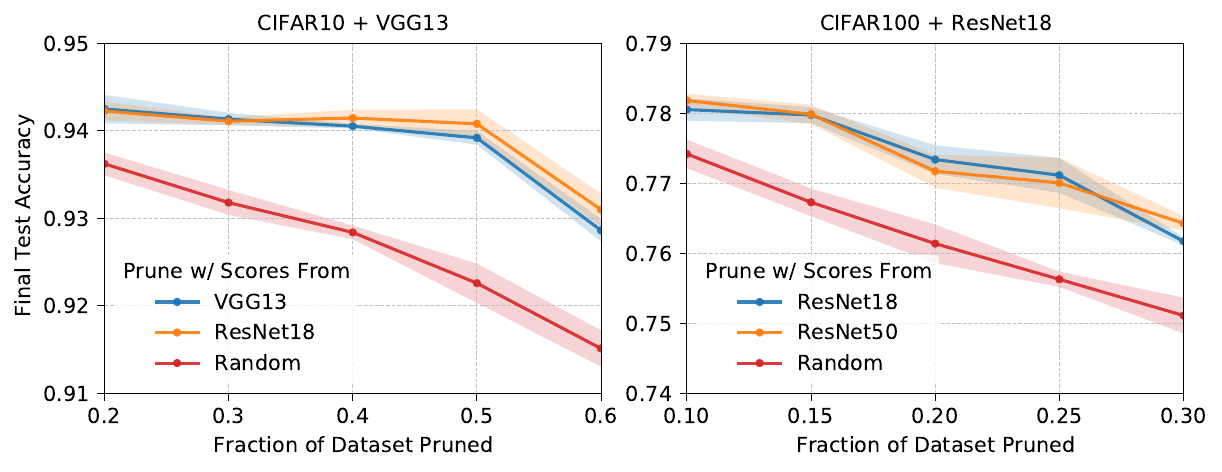}
\centering 
\caption{EL2N scores calculated using one network can be used to prune data for training with another network. Left: we compare VGG13 networks trained on CIFAR10 after pruning different fractions of the train set. The examples with the lowest EL2N scores were dropped. EL2N scores were calculated at epoch 20 using 10 independent networks, with either a VGG13 or a ResNet18 architecture. Right: we compare ResNet18 networks trained on CIFAR100 after pruning different fractions of the train set. The examples with the lowest EL2N scores were dropped. EL2N scores were calculated at epoch 20 using 10 independent networks, with either a ResNet18 or a ResNet50 architecture. Random pruning is used as a baseline. Test accuracy after pruning is agnostic to which network was used to calculate the scores: EL2N scores generalize across architectures.}
\label{fig:arch_gen}
\end{figure}

\subsection{EL2N scores generalize across architectures}
\label{app:arch_gen}
EL2N scores are robust to variations in architectures and seem to capture information intrinsic to the dataset, not network specific behavior. 
To support this, we show that EL2N scores calculated with one architecture can be used to prune the dataset for other architectures; scores calculated from a different architecture leads to final test accuracies that are identical to those trained using scores calculated from the same architecture. Our experiment setup is as follows:
\begin{enumerate}
    \item For CIFAR10, calculate the EL2N scores at epoch 20 using a ResNet18 and a VGG13. This is done by averaging over 10 independently initialized networks.
    \item Use both scores to train new randomly initialized VGG13 networks on n\% subsets of CIFAR10, keeping only the highest scores for each score type. We perform this experiment for a range of n (n = 20, 30, 40, 50, and 60).
\end{enumerate}
VGG13 networks trained on subsets using either score perform identically and significantly better than random. 
To show robustness across architectures and datasets, we repeat this experiment for CIFAR100, ResNet18 instead of VGG13 and ResNet50 instead of ResNet18 with n = 10, 15, 20, 25, 30. 
Our results in \cref{fig:arch_gen} show that, for a given network and datasets, pruning performance of EL2N scores are agnostic to which network is used to calculate them.
In addition to being a tool that extracts something intrinsic about the dataset, this has practical applications as, for a given dataset, one could extract a smaller important subset for future networks to train on, thereby speeding up the iteration process.

\begin{figure}[t]
\includegraphics[width=\linewidth]{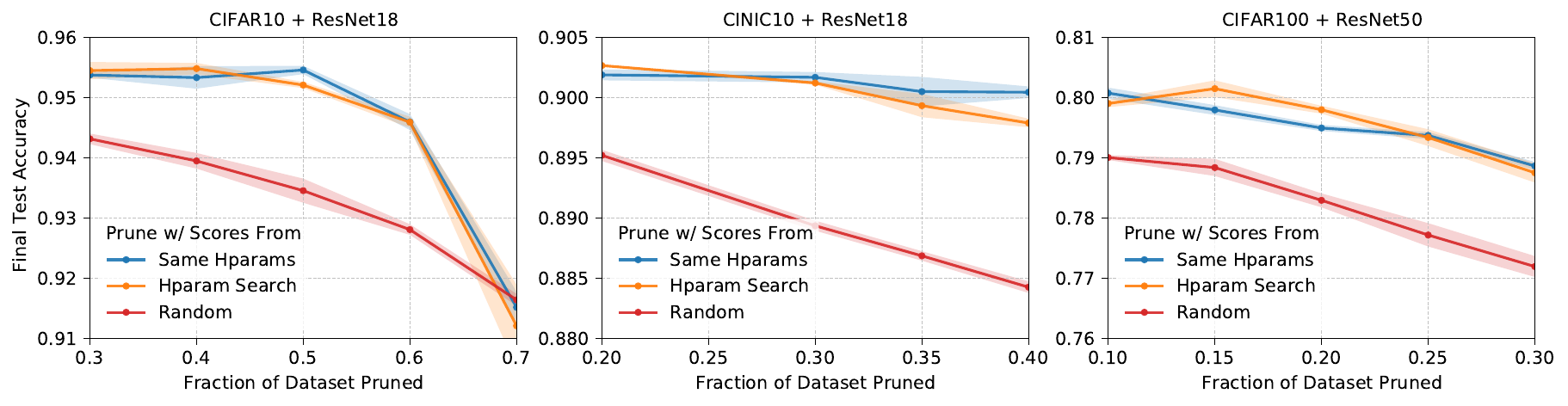}
\centering 
\caption{EL2N scores can be effectively calculated using networks from hyperparameter search runs. In this experiment, we calculate EL2N scores by averaging over networks trained with different hyperparameter configurations. Hyperparameters were chosen for a grid search. We compare this to EL2N scores calculated by averaging over networks trained with the optimal hyperparameters. In both cases, scores were calculated at epoch 20. The scores are evaluated by comparing the final accuracy of networks trained on datasets pruned by either score to different sizes. The evaluation networks were trained with optimal hyperparameters. We also compare to a random pruning baseline. From left to right, we perform this experiment for CIFAR10 with ResNet18, CINIC10 with ResNet18 and CIFAR100 with ResNet50. In all cases, for a wide range of pruning percentages, EL2N scores calculated during hyperparameter optimization perform as well as scores calculated using optimal hyperparameters.}
\label{fig:hpo}
\end{figure}

\subsection{Calculating EL2N scores during hyperparameter optimization}
\label{app:hpo}
For already benchmarked datasets, it can be helpful to reduce the size of the dataset for training future networks.
But in practice, we often have to train on new datasets, and for each new dataset, we need to perform a hyperparameter optimization search. 
In the next experiment, we show that networks trained during hyperparameter optimization can be used to calculate EL2N scores that perform as well as EL2N scores calculated by averaging over an ensemble of networks with optimal hyperparameters. This is a robust effect across architectures and datasets, we show it for ResNet18 traind on CIFAR10, ResNet18 trained on CINIC10, and ResNet50 trained on CIFAR100. Our experiment setup is as follows:
\begin{enumerate}
    \item For each architecture and dataset configuration perform a hyperparameter grid search over learning rates in $\{0.2, 0.1, 0.05\}$ and weight decays in $\{0.001, 0.0005, 0.0001\}$. This leads to 9 training runs using different hyperparameters for each configuration.
    \item For each dataset, calculate EL2N scores at epoch 20 by averaging over the 9 corresponding networks. 
    \item For each dataset, prune a range of fractions of the data by dropping examples with the lowest corresponding score.
    \item For each architecture and dataset configuration, train new randomly initialized networks with optimal hyperparameters on the pruned datasets and evaluate their performance.
\end{enumerate}
As baselines, we compare to random subset baselines and to EL2N scores at Epoch 20 calculated by averaging over 9 networks trained with optimal hyperparameters. Our results are in \cref{fig:hpo}. 
For all three architecture and dataset configurations, and for a range of data pruning levels, EL2N scores calculated during hyperparameter optimization perform as well as EL2N scores calculated using networks with optimal hyperparameters. 
This further demonstrates that EL2N scores capture intrinsic dataset properties and not network specific properties. 
It also has the nice benefit that, after hyperparameter optimization, we essentially get EL2N scores for the price of one forward pass through the whole dataset, thus reusing the expensive compute from hyperparameter search to reduce future compute and make network training more energy efficient.
An interesting direction for future research is to cleverly use scores to identify smaller datasets that can be used to speed up the hyperparameter optimization process. 

\begin{figure}[t]
\includegraphics[width=0.4\linewidth]{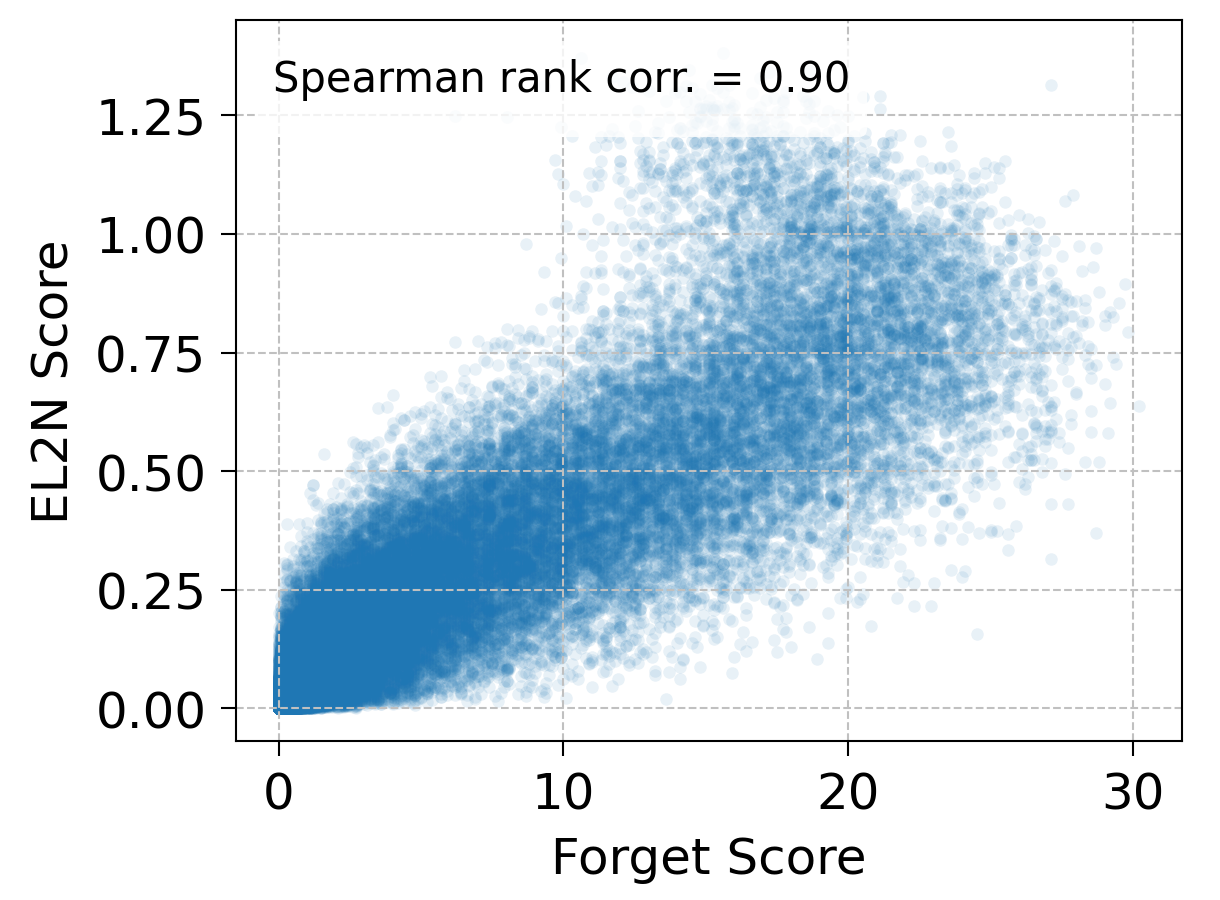}
\centering 
\caption{For a ResNet18 trained on CIFAR-10, we compare the values and Spearman rank correlations for pairs of scores. The scores compared are \grand scores at initialization, \eltoon scores at epoch 20 and forgetting scores at epoch 200.}
\label{fig:coff_cifar10}
\end{figure}

\subsection{Correlations between scores}
\label{app:correlation}

As discussed in previous sections, we find that different scores lead to similar pruning results. In \cref{fig:coff_cifar10}, for a ResNet18 trained on CIFAR-10, we compare the values and Spearman rank correlations for pairs of scores. \grand scores are computed at initialization, \eltoon scores at epoch 20 and forgetting scores at epoch 200. 
\eltoon and forgetting scores, which have the most similar performance, have the highest Spearman rank correlation.

\section{Noise}
\label{app:noisylabels}

\subsection{Noisy Examples in Low Pruning Regime}
\label{app:noise}

\begin{figure}[t]
\includegraphics[width=0.7\linewidth]{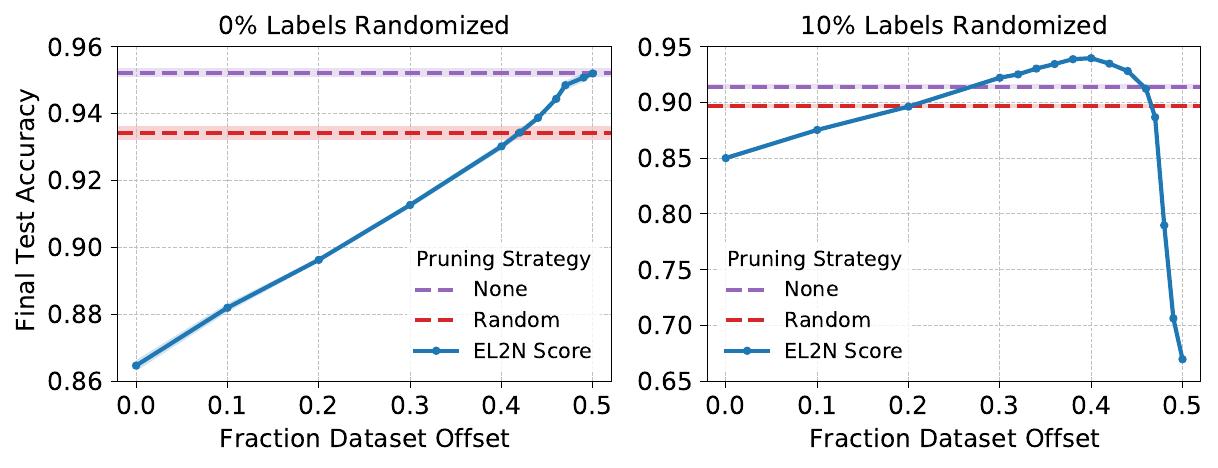}
\centering 
\caption{ResNet18 trained on a 50\% subset of CIFAR-10 with clean \textit{(left)} and 10\% randomized labels \textit{(right)}. 
The training subset contains the \emph{lowest} scoring \samples
\emph{after} \samples with scores below the offset (x-axis) are discarded.
Scores computed at epoch 10.}
\label{fig:slidingwindow50}
\end{figure}

We repeat the noisy labels experiment of \cref{fig:slidingwindow} in a different regime; instead of pruning 60\% of the data, we prune just 50\% of the data. Results are shown in \cref{fig:slidingwindow50}. For the experiment with no labels randomized, at this lower pruning level we no longer see a boost in performance from dropping the highest score examples. However we do see decreasing marginal gains. This suggests that when we have enough data, keeping the high score examples, which are often noisy or  difficult, does not hurt performance and can only help.

\subsection{Scores for Noisy Examples}
\label{app:noisescore}

\begin{figure}[t]
\includegraphics[width=0.7\linewidth]{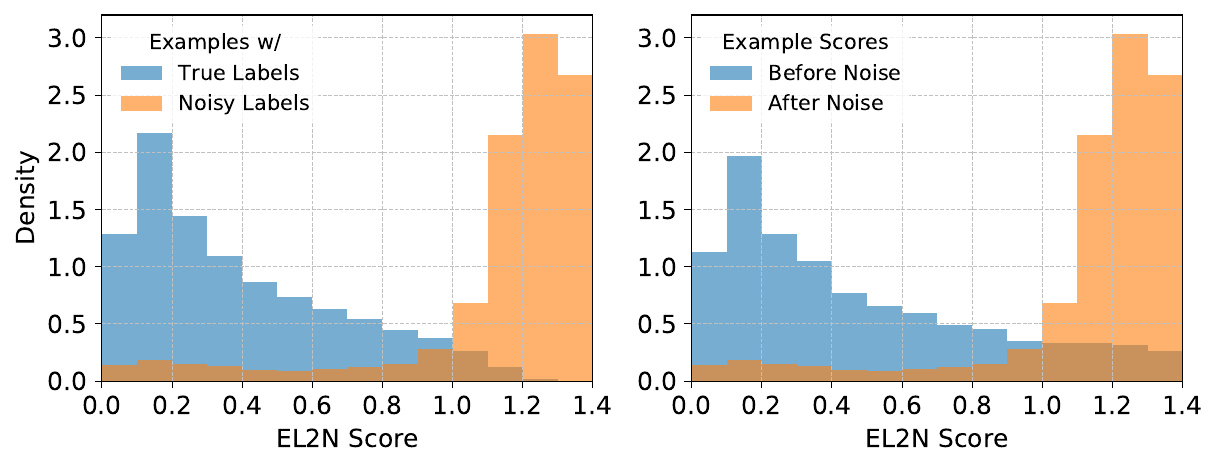}
\centering 
\caption{Scores for examples in the noisy label experiment of \cref{fig:slidingwindow} 
\textit{Left.} Distribution of \eltoon scores for examples with true labels and corrupted labels. \textit{Right.} Distribution of \eltoon scores for examples that are corrupted before and after noise is added.}
\label{fig:noisescores}
\end{figure}

We now examine how adding noise affects the \eltoon scores of examples. This analysis is done for the experiment in \cref{fig:slidingwindow}. See \cref{app:noisydata} for details. Results are shown in \cref{fig:noisescores}. Two results suggest that \eltoon scores can be used to identify images with corrupted labels. First, the \eltoon scores of images with corrupted labels tend to be higher than those with regular labels. Second, after an example's label is corrupted, its \eltoon scores tends to be larger than before.

\section{Comparison to memorization threshold}
\label{app:comparetomemorization}

\begin{figure}[]
\includegraphics[width=0.4\linewidth]{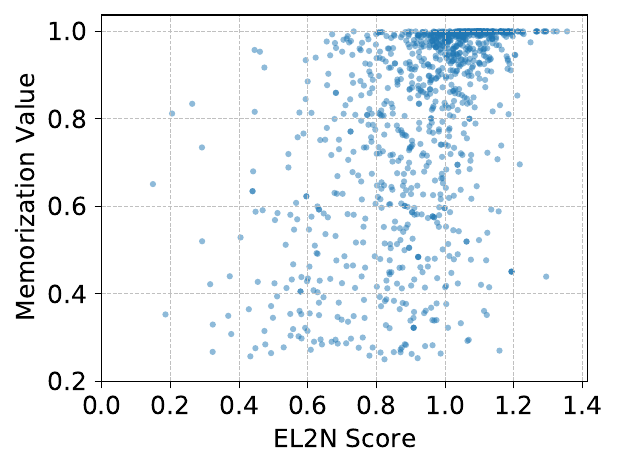}
\centering 
\caption{Comparison of \eltoon scores and Memorization values from \citep{feldman2020neural}}
\label{fig:mem}
\end{figure}

In \cref{fig:mem} we compare \eltoon scores to the memorization values defined in \citep{feldman2020neural}. The memorization values for 1015 CIFAR100 examples are provided by the authors. We replicate their setting by training a ResNet50 on CIFAR-100 and compute the \eltoon scores at epoch 20 for the 1015 examples they provide. 
As seen from \cref{fig:mem}, memorization values and \eltoon scores do not appear to be correlated.

\end{document}